\newtheoremstyle{break}
  {\topsep}{\topsep}%
  {\itshape}{}%
  {\bfseries}{}%
  {\newline}{}%
\theoremstyle{break}
\newtheorem{definition}{Definition}
\newtheorem{lemma}{Lemma}
\ificcvfinal\pagestyle{empty}\fi
\begin{document}

\title{Camera Pose Filtering with Local Regression Geodesics\\on the Riemannian Manifold of Dual Quaternions}

\author{
Benjamin Busam${}^{1,2}$
\and
Tolga Birdal${}^{1,3}$
\and
Nassir Navab${}^{1,4}$\vspace{2ex}\\
${}^{1}$ Computer Aided Medical Procedures, Technische Universit\"at M\"unchen, Germany\\
${}^{2}$ FRAMOS GmbH, Germany\quad
${}^{3}$ Siemens AG, Germany\\
${}^{4}$ Computer Aided Medical Procedures, Johns Hopkins University, US\vspace{2ex}\\
{\tt\small b.busam@framos.com},
{\tt\small tolga.birdal@tum.de},
{\tt\small navab@cs.tum.edu}
}

\maketitle


\begin{abstract}\noindent
Time-varying, smooth trajectory estimation is of great interest to the vision community for accurate and well behaving 3D systems. In this paper, we propose a novel principal component local regression filter acting directly on the Riemannian manifold of unit dual quaternions $\mathbb{D} \mathbb{H}_1$. We use a numerically stable Lie algebra of the dual quaternions together with $\exp$ and $\log$ operators to locally linearize the 6D pose space. Unlike state of the art path smoothing methods which either operate on $SO\left(3\right)$ of rotation matrices or the hypersphere $\mathbb{H}_1$ of quaternions, we treat the orientation and translation jointly on the dual quaternion quadric in the 7-dimensional real projective space $\mathbb{R}\mathbb{P}^7$. We provide an outlier-robust IRLS algorithm for generic pose filtering exploiting this manifold structure. Besides our theoretical analysis, our experiments on synthetic and real data show the practical advantages of the manifold aware filtering on pose tracking and smoothing.
\end{abstract}


\section{Introduction}\noindent

Many 3D computer vision tasks require a robust and reliable understanding of position and orientation in space and a outlier-free localization of the camera with respect to its surroundings is a fundamental requirement in machine vision tasks such as registration, reconstruction or tele-robotics.
Independent on the given data modality, the natural underlying structure of the input is a temporally ordered set which can be analyzed sequentially. In an example scenario such as SLAM, egocentric vision or marker tracking, a single camera provides a stream of consecutive images generating a pose path.
In this paper, we address motion smoothing, where - given the per frame motion estimates - the goal is to synthesize a new camera trajectory, which is smooth and closer to the underlying movement or intended trajectory. 

\subsection{Pose Parametrization}\noindent
Different parametrizations are prominent to accurately and efficiently describe spatial displacements.
Most widely used representations include homogeneous matrices, quaternions or twist-coordinates.
While many of these are sufficient to handle orientation, the translational component is usually treated separately~\cite{Farenzena2008, Jia2014}.
Moreover, the algorithms which use the semi-direct product of $SO\left(3\right)$ and $\mathbb{R}^3$ to describe elements of the group $SE\left(3\right)$ suffer from the structural embedding of the rotational part in an higher dimensional space.
One solution to circumvent this problem emerges with the dual quaternion (DQ) parametrization, in which all pose components form a common space.
Besides a smaller memory footprint the representation has the advantage of being cheaper in case of consecutive transformations while it is numerically more stable to calculate due to the fact that the matrix space is higher dimensional and a re-orthogonalization~\cite{Belta2002} is computationally much more expensive than a re-normalization.
The DQ formulation has already been successful in applications of interpolation~\cite{Busam2016}, skinning~\cite{Kavan2007} or rigid body dynamics~\cite{Xu2016}.
Unfortunately, the applications are still not numerous.
This is partially due to the fact that the manifold is more complex than the one of the quaternions and immediate geometric intuitions are lacking.
With this paper, we devise a general picture on how to operate on the quadric shaped Riemannian manifolds of dual quaternions by studying the  exponential and logarithm maps and use them to construct an outlier aware, robust pose smoothing algorithm. 

\subsection{Manifold Filter}\noindent
In practice, acquired camera poses follow complex noise models as partially interdependent factors such as sensor noise, hand-shake, wrong pose estimates and velocity appear.
The behaviour is highly non-linear and tedious to model directly.
Thus, the requirement of tuning the model parameters for Kalman filter methods is not suited for certain applications.
We employ a non-parametric smoother without explicit distribution assumptions on the overall data which we embed in a framework on the DQ-manifold.
At this point, the local regression (LR) reveals itself to be the method of choice.
In the simplest form of LR, one of the points is treated as the center and a linear relationship between the independent and response variables in its local neighbourhood are sought.
This point is then projected onto the line, gaining a new, smoothed position.
This procedure iterates over the sequence, while iteratively denoising the points.
While this procedure is fairly easy for Euclidean spaces, many of these operations do not generalize to non-Euclidean manifolds.
We make use of the fact that a point on the pose path is selected and the regression considers only the immediate neighbourhood.
Hence, we utilize the exponential and logarithm maps in the quaternion notation to operate on the local tangent space, alleviating the tedious non-Euclidean geometry.
We contribute a PCA based revised LR formulation on the local linearized space, making effective use of the Riemannian DQ-manifold.

Our experiments show that, while a weighted manifold-PC regression is mostly satisfactory, for pose sequences with outliers, IRLS smoothing on DQ outperforms in terms of fidelity.
The advantage of manifold smoothing also reveals as the noise level increases.

\section{Related Literature}\noindent
The Clifford Algebra of dual quaternions~\cite{Clifford1871} extends the non-commutative division algebra of the quaternions $\mathbb{H}$~\cite{Hamilton1844}.
Besides pure~\cite{Arnold1995} and applied geometry~\cite{RichterGebert2009}, quaternions have a broad history in computer vision~\cite{Pervin1982}.

Rotations are more efficiently concatenated if a quaternion representation is chosen and effects such as gimbal lock are avoided~\cite{Lepetit2005}, which is why quaternions are applied to various real-time 3D processing pipelines~\cite{Mukundan2002}.
In particular the fact that $\mathbb{H}$ can be identified with points on a 3-dimensional hypersphere $S^3$ makes it attractive for animation and rendering as SLERP~\cite{Shoemake1985} interpolation can be calculated efficiently.

In spite of their usefulness for many real-time calculations, dual approaches did not receive the same attention.\\
A dual quaternion $\textbf{Q} \in \mathbb{D} \mathbb{H}_1$ of unit length can be used to represent a rigid body displacement of an object.
Kuang~\cite{Kuang2011} uses this to show the advantages of dual quaternions for real-time motion animation for clothed body movements and Kavan~\cite{Kavan2007} applied $\mathbb{D} \mathbb{H}_1$ for skinning.
Dual methods are also present in blending~\cite{Pennestri2010} and complex hierarchical rigid body transforms~\cite{Kenwright2012} and more recently, movement extrapolation~\cite{Busam2016} has been an application field for $\mathbb{D} \mathbb{H}_1$.

We represent rigid body movements as trajectories on $\mathbb{D} \mathbb{H}_1$ where we filter noisy paths.
Regression models for interpolation are essential for computer animation and refinement of pose data~\cite{Parent2012}.
Kavan~\cite{Kavan2006} approximates the geodesic distance with an L2-norm in $\mathbb{R}^8$ for real-time transformation blending.
However, the unit dual quaternion space is non-Euclidean.

A diffusion approach in this regard has been proposed by Torsello~\cite{Torsello2011} where the Riemannian metric is minimized for multiview registration.
We also take the Riemannian nature of $\mathbb{D} \mathbb{H}_1$ into account and perform a differential correction by moving to the tangent space where we apply a local smoothing method.
Similar filtering approaches have been studied extensively~\cite{Farenzena2008, Jia2014} in the joint space $\mathbb{H}_1 \times \mathbb{R}^3$, where the pose is split into rotation and translation parts.
Ng~\cite{Ng2016} proposes a Gaussian smoothing on the non-dual quaternion manifold~\cite{Ng2016} while Srivatsan~\cite{Srivatsan2016} explicitly assumes an underlying noise model and uses a Linear Kalman filter on dual quaternions in $SE\left(3\right)$ which is done with an Extended Kalman Filter (EKF) for pose estimation by Filipe~\cite{Filipe2015}.
On the application side, smoothing techniques are used for of Video Stabilization~\cite{Jia2014} and robotics~\cite{Farenzena2008}.

\section{Mathematical Formulation}
\label{sec:math}\noindent
A substantial part of the proposed framework uses approaches from differential geometry which are applied to the space of poses.
We first introduce the pose spaces $\mathbb{H}_1$ and $\mathbb{D} \mathbb{H}_1$ of unit (dual) quaternions which we analyze thereafter locally and conclude by giving the exponential and logarithm maps in quaternion representation such that these can be implemented in the proposed algorithms afterwards.
The notation follows the convention of Busam~\cite{Busam2016} where the algebraic structures are presented in more detail.

\subsection{Quaternions}
\label{sec:quaternions}\noindent
Quaternions extend the complex numbers with three imaginary units $\textbf{i}$, $\textbf{j}$, $\textbf{k}$.
\begin{definition}
	A \textbf{quaternion} $\textbf{q}$ is an element of the algebra $\mathbb{H}$ in the form
  \begin{align}
		\textbf{q}
		= q_1 \textbf{1} + q_2 \textbf{i} + q_3 \textbf{j} + q_4 \textbf{k}
    = \left(q_1, q_2, q_3, q_4\right)^{\text{T}},
	\end{align}
		with $\left(q_1, q_2, q_3, q_4\right)^{\text{T}} \in \mathbb{R}^4$ and
$\textbf{i}^2 = \textbf{j}^2 = \textbf{k}^2 = \textbf{i}\textbf{j}\textbf{k} = - \textbf{1}$.
\end{definition}
We also write $\textbf{q} := \left[a, \textbf{v}\right]$ with the scalar part $a = q_1 \in \mathbb{R}$ and the vector part $\textbf{v} = \left(q_2, q_3, q_ 4\right)^{\text{T}} \in \mathbb{R}^3$.
The conjugate $\bar{\textbf{q}}$ of the quaternion $\textbf{q}$ is given by
\begin{align}
	\bar{\textbf{q}} := q_1 - q_2 \textbf{i} - q_3 \textbf{j} - q_4 \textbf{k}.
\end{align}
Quaternions are of particular interest in computer vision due to their connection with spatial rotations~\cite{Faugeras2001}.
A \textbf{unit quaternion} $\textbf{q} \in \mathbb{H}_1$ with
\begin{align}
	1 \stackrel{\text{!}}{=} \left\|\textbf{q}\right\|
	:= \textbf{q} \cdot \bar{\textbf{q}}
	\label{quatConstraints}
\end{align}
gives a compact and numerically stable parametrization to represent orientation and rotation of objects in $\mathbb{R}^3$ which avoids gimbal lock~\cite{Lepetit2005}.
\subsubsection*{Rotations with Quaternions}
\label{sec:quatsRot}\noindent
The rotation around the unit axis $\textbf{v} = \left(v_1,v_2,v_3\right)^{\text{T}} \in \mathbb{R}^3$ with angle $\theta$ is thereby given by
\begin{align}
  \textbf{r}
	= \left[\cos \left( \theta / 2 \right), \sin \left( \theta / 2 \right) \textbf{v} \right].
	\label{rotQuat}
\end{align}
Identifying antipodal points $\textbf{q}$ and $-\textbf{q}$ with the same element in $SO\left(3\right)$, the unit quaternions form a double covering group of the 3D rotations and any \textbf{pure quaternion}
\begin{align}
	\textbf{p}
	= x \textbf{i} + y \textbf{j} + z \textbf{k}.
	\label{pureQuat}
\end{align}
of the point $\textbf{u} = \left(x,y,z\right)^{\text{T}} \in \mathbb{R}^ 3$ is rotated by the unit quaternion $\textbf{r}$ via the sandwiching product map
\begin{align}
	\textbf{p}
	\mapsto \textbf{r} \cdot \textbf{p} \cdot \bar{\textbf{r}}.
\end{align}

\subsection{Dual Quaternions}
\label{sec:dualquats}\noindent
Similar to the representation of rotations by quaternions, we can use \textbf{dual quaternions} of unit length to represent spatial displacements.
We can define a dual quaternion as an ordered pair of quaternions with dual numbers as coefficients.
A \textbf{dual number} $Z$ is an element of the algebra $\mathbb{D}$ that can be written~\cite{Kenwrigth2013} as
$Z = r + \varepsilon s$
where $r, s \in \mathbb{R}$ and $\varepsilon^2 = 0$, where $r$ is the real-part, $s$ is the dual part, and $\varepsilon$ is called the dual operator.
The dual conjugate is similar to the complex conjugate of $\mathbb{R} + i\mathbb{R}$.
It is given by
$	\hat{Z}	:= r - \varepsilon s$.

Extending this concept to quaternions, we can define dual quaternions.
\begin{definition}
	A \textbf{dual quaternion} $\textbf{Q} \in \mathbb{D} \mathbb{H}$ is an ordered set of quaternions
	\begin{align}
		\textbf{Q} = \textbf{r} + \varepsilon \textbf{s}
		= \left(q_1, q_2, q_3, q_4, q_5, q_6, q_7, q_8\right)^{\text{T}},
	\end{align}
		where $\textbf{r}, \textbf{s} \in \mathbb{H}$,
		$\left(q_1, q_2, q_3, q_4, q_5, q_6, q_7, q_8\right)^{\text{T}} \in \mathbb{R}^8$ and
	\begin{align}
		\varepsilon^2 = 0, \quad
		\varepsilon \textbf{i} = \textbf{i} \varepsilon, \quad
		\varepsilon \textbf{j} = \textbf{j} \varepsilon, \quad
		\varepsilon \textbf{k} = \textbf{k} \varepsilon.
	\end{align}
\end{definition}
The Clifford algebra of dual quaternions contains the real numbers $\mathbb{R}$, the complex numbers $\mathbb{C}$, the dual numbers $\mathbb{D}$, and the quaternions $\mathbb{H}$ as sub-algebras.
With the conjugate $\bar{\textbf{Q}}$ of the dual quaternion $\textbf{Q} = \textbf{r} + \varepsilon \textbf{s}$
\begin{align}
	\bar{\textbf{Q}}
	:= \bar{\textbf{r}} + \varepsilon \bar{\textbf{s}},
\end{align}
we can study the constraints given for unit \textbf{unit dual quaternions} $\textbf{Q} \in \mathbb{D} \mathbb{H}_1$.
If we demand unit length, it holds
\begin{align}
        1 \stackrel{\text{!}}{=} \left\|\textbf{Q}\right\|
        := \textbf{Q} \cdot \bar{\textbf{Q}}
        = \textbf{r} \bar{\textbf{r}} + \varepsilon \left(\textbf{r} \bar{\textbf{s}} + \textbf{s} \bar{\textbf{r}}\right),
\end{align}
which gives the two distinct constraints
\begin{align}
        \textbf{r} \bar{\textbf{r}} = 1 \quad
        \text{and}
        \quad \textbf{r} \bar{\textbf{s}} + \textbf{s} \bar{\textbf{r}} = 0.
        \label{dualQuatConstraints}
\end{align}
\subsubsection*{Displacements with Dual Quaternions}
\label{sec:dualquatsDisplace}\noindent
The unit dual quaternions are isomorphic to the group of rigid body displacement $SE\left(3\right)$~\cite{Ablamowicz2004} and the two constraints (\ref{dualQuatConstraints}) reduce the eight parameters of the dual quaternions to the six degrees of freedom of a rigid motion in space with its translation and rotation.
If we write the translation as a pure quaternion $\textbf{t}$ (\ref{pureQuat}) and the rotation as a unit quaternion $\textbf{r}$ (\ref{rotQuat}), we can construct the unit dual quaternion
\begin{align}
	\mathbb{D} \mathbb{H}_1 \ni
	\textbf{Q}
	= \textbf{r} + \varepsilon \frac{1}{2} \textbf{t} \textbf{r}.
	\label{dualquatTrafo}
\end{align}
Analogously to the quaternions, we formulate the \textbf{dual pure quaternion} $\textbf{P}$ for the point $\textbf{p} = \left[0, \textbf{u}\right]$ as $\textbf{P}= \textbf{1} + \varepsilon \textbf{u}$ and the spatial displacement becomes the sandwiching product map on dual quaternions
\begin{align}
	\textbf{P}
	\mapsto &\textbf{Q} \cdot \textbf{P} \cdot \hat{\bar{\textbf{Q}}}
	= 1 +  \varepsilon \left( \textbf{r} \textbf{u} \bar{\textbf{r}} + \textbf{t} \right),
\end{align}
where the conjugates for the dual quaternion and the dual are calculated consecutively.

\subsection{Riemannian Geometry}
\label{sec:riemannian}\noindent
The spaces $\mathbb{H}_1$ and $\mathbb{D} \mathbb{H}_1$ can also be considered as differentiable Riemannian manifolds $\mathbb{G}$~\cite{Tron2008}.
A continuous collection of inner products on the tangent space of $\mathbb{G}$ at $\textbf{x} \in \mathbb{G}$ defines a Riemannian metric.
The shortest path on the mani\-fold defined by such a metric is called the geodesic.\\
With these concepts, we take a closer look to the geometry of the (dual) quaternion space and calculate a specific mapping into the tangent space which we generalize with the help of parallel transport.
This will pave the way to subsequently define different local geodesic regressors in pose space.
\subsubsection{Geometry of $\mathbb{H}_1$ and $\mathbb{D} \mathbb{H}_1$}
\label{sec:geometry}\noindent
With constraint (\ref{quatConstraints}), the unit quaternions form the three dimensional hypersphere $S^3 \in \mathbb{R}^4$.
Thus $\mathbb{H}_1$ is isomorphic to the real projective space $\mathbb{R}\mathbb{P}^3$.
Looking at the two constraints from (\ref{dualQuatConstraints}), we can analyze the structure of the unit dual quaternion space.
The first equation $\left\|\textbf{r}\right\| = 1$ forces the real part $\textbf{r}$ of $\textbf{Q}$ to be of unit length, hence $\textbf{r} \in \mathbb{H}_1$.
This gives the 7-dimensional hypersphere $S^7 \in \mathbb{R}^8$ and the identification of antipodal points forms the seven dimensional real projective space $\mathbb{R}\mathbb{P}^7$.
The second equation reads as $\textbf{r} \bar{\textbf{s}} = - \textbf{s} \bar{\textbf{r}}$ and thus defines a quadric in $\mathbb{R}\mathbb{P}^7$.
Thus $\mathbb{D} \mathbb{H}_1$ is not a hypersphere.
This need to be considered for any operation on the manifold.
\begin{figure*}
	\centering
	\includegraphics[width=\linewidth]{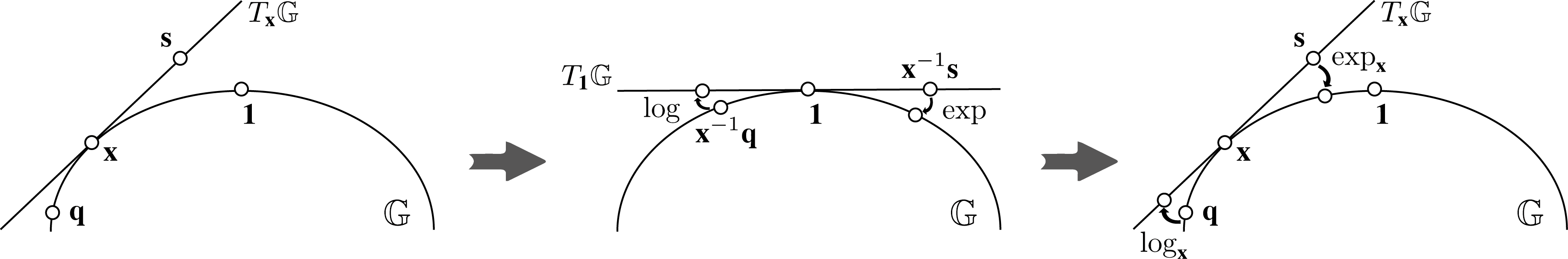}
	\caption{Parallel transport for the calculation of $\exp_{\textbf{x}}$ and $\log_{\textbf{x}}$. The exponential and logarithm maps at the support point $\textbf{x}$ are indirectly calculated via the explicit maps at the identity.}
	\label{fig:parallelTransport}
\end{figure*} 
\subsubsection{Lie Groups and Parallel Transport}
\label{sec:lieGroup}\noindent
From a differential geometry perspective, a Lie group can be viewed as a differentiable Riemannian manifold.
The Lie algebra to the Lie group is the tangent space at the identity of the group.
Thus it gives a linearization of the Lie group near the identity.
The map from the tangent space $T_{\textbf{x}}\mathbb{G}$ at $\textbf{x}$ to the Lie group $\mathbb{G}$ is called the exponential map
\begin{align}
\exp_{\textbf{x}} : T_{\textbf{x}}\mathbb{G} \rightarrow \mathbb{G},
\end{align}
which is locally defined and maps a vector in the tangent space to a point on the manifold following the geodesic on $\mathbb{G}$ through $\textbf{x}$.
Its inverse is called the logarithm map
\begin{align}
\log_{\textbf{x}} : \mathbb{G} \rightarrow T_{\textbf{x}}\mathbb{G}.
\end{align}
The mapping $\exp_{\textbf{x}}$ at point $\textbf{x} \in \mathbb{G}$ can be computed by parallel transport~\cite{Gallier2012} as illustrated in Figure~\ref{fig:parallelTransport}.
With the exponential map $\exp_{\textbf{1}} =: \exp$ at the identity $\textbf{1} \in \mathbb{G}$ and the logarithm map it holds
\begin{align}
	\exp_{\textbf{x}} \left( \textbf{s} \right)
	= \textbf{x} \exp \left( \textbf{x}^{-1} \textbf{s} \right),
	\label{parallelTransportExp}\\
		\log_{\textbf{x}} \left( \textbf{q} \right)
	= \textbf{x} \log \left( \textbf{x}^{-1} \textbf{q} \right).
	\label{parallelTransportLog}
\end{align}
As a next step, we want to use this to derive the exponential and logarithm maps at the identity for the elements of the groups $SO\left(3\right)$ and $SE\left(3\right)$ in quaternion notation.
For matrices these maps are well studied objects~\cite{Visser2006},~\cite{Murray1994}.
We study the exponential maps directly in (dual) quaternion space using its definition as a Maclaurin series.
\subsubsection{Exponential and Logarithm map in $\mathbb{H}$}
\label{sec:quatExpLog}\noindent
The identity in $\mathbb{H}_1$ is given by $\textbf{1} = \left(1, 0, 0, 0\right)^{\text{T}}$.
The tangent space $T_{\textbf{1}}\mathbb{H}_1$ is thus the hyperplane to the hypersphere $S^3 \in \mathbb{R}^4$ and parallel to the axes $x_2$, $x_3$, $x_4$ passing through $\textbf{1}$.
Any quaternion in $T_{\textbf{1}}\mathbb{H}_1$ is of the form
\begin{align}
	\mathbb{H} \ni \textbf{q}
	= \left[0, \phi \textbf{v}\right] 
\end{align}
with $\textbf{v} \in \mathbb{R}^3$, $\left\|\textbf{v}\right\| = 1$ and the series writes as
\begin{align}
	\exp : T_{\textbf{1}}\mathbb{H}_1 &\rightarrow \mathbb{H}_1\\
	\textbf{q}
	&\mapsto 1 + \sum_{k=1}^{\infty} \frac{\textbf{q}^k}{k!}
	:= \sum_{k=0}^{\infty} \frac{\textbf{q}^k}{k!}\\
	&= \cos \left( \phi \right) + \frac{\sin \left( \phi \right)}{\phi} \textbf{q}\\
	&= \left[\cos \left( \phi \right), \sin \left( \phi \right) \textbf{v}\right]
	=: \textbf{r}.
	\label{expQuat}
\end{align}
where the second last step is done by recognizing the Taylor series for the sine and cosine function at $0$.
Note, that this relationship directly aligns with the notation in (\ref{rotQuat}) while $\phi = \theta / 2$ and the inverse function is given by
\begin{align}
	\log : \mathbb{H}_1 &\rightarrow T_{\textbf{1}}\mathbb{H}_1\\
	\textbf{r}
	&\mapsto \left[0, \phi \textbf{v} \right].
	\label{logQuat}
\end{align}
\subsubsection{Exponential and Logarithm map in $\mathbb{D} \mathbb{H}$}
\label{sec:dualquatExpLog}\noindent
Let
\begin{align}
	\mathbb{D} \mathbb{H} \ni \textbf{Q}
	&= \omega \textbf{q} + \varepsilon \psi \textbf{q}_{\varepsilon}
	\label{pureDualquat}
\end{align}
be a pure dual quaternion with the two pure quaternions $\textbf{q}, \textbf{q}_{\varepsilon} \in \mathbb{H}_1$.
Simplification~\cite{Selig2010} of the Maclaurin series for the exponential map then yields
\begin{align}
	\exp : T_{\textbf{1}}\mathbb{D}\mathbb{H}_1 &\rightarrow \mathbb{D}\mathbb{H}_1\\
	\textbf{Q}
	&\mapsto \sum_{k=0}^{\infty} \frac{\textbf{Q}^k}{k!}
	\label{expDualQuat}\\
	&= \frac{1}{2}\left(2 \cos\left(\omega\right) + \omega\sin\left(\omega\right)\right)\\
		&- \frac{1}{2 \omega}\left( \omega\cos\left(\omega\right) - 3\sin\left(\omega\right) \right) \textbf{Q}\\
		&+ \frac{1}{2 \omega}\left( \sin\left(\omega\right) \right) \textbf{Q}^2\\
		&- \frac{1}{2 \omega^3}\left( \omega\cos\left(\omega\right) - \sin\left(\omega\right) \right) \textbf{Q}^3.
\end{align}
Before we compute the inverse function, we make the observation that any unit dual quaternion
\begin{align}
	\textbf{Q}
	= \left[\phi, \textbf{v}\right] + \varepsilon \left[\phi_{\varepsilon}, \textbf{v}_{\varepsilon}\right]
	=: \left[\Phi, \textbf{V}\right]
\end{align}
with the dual entities
\begin{align}
	\Phi
	&= \phi + \phi_{\varepsilon} \varepsilon
	\label{dualPhi}\\
	\textbf{V}
	&= \textbf{v} + \textbf{v}_{\varepsilon} \varepsilon.
	\label{dualV}
\end{align}
can be written~\cite{Daniilidis1999} equivalently to (\ref{rotQuat}).
For this, we calculate the dual trigonometric operators through a series expansion which brings
\begin{align}
	\sin \left( \Phi \right)
	&:= \sin \left( \phi \right) + \varepsilon \phi_{\varepsilon} \cos \left( \phi \right)
	\label{dualSin}\\
	\cos \left( \Phi \right)
	&:= \cos \left( \phi \right) - \varepsilon \phi_{\varepsilon} \sin \left( \phi \right).
	\label{dualCos}
\end{align}
We proof the following lemma by explicit calculation of the dual quaternion representation.
\begin{lemma}
Any unit dual quaternion $\textbf{Q} \in \mathbb{D}\mathbb{H}_1$ can be written as
\begin{align}
	\textbf{Q}
	&= \left[\cos \left( \Theta / 2 \right), \sin \left( \Theta / 2 \right) \textbf{V}\right],
	\label{dualquatDisplacement}
\end{align}
where $\textbf{V} \in \mathbb{D}\mathbb{H}$ is a pure dual quaternion of form (\ref{pureDualquat}).
\end{lemma}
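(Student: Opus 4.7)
The plan is to exhibit the decomposition constructively by parameterising $\textbf{Q} = \textbf{r} + \varepsilon\textbf{s}$ and matching against the ansatz $[\cos(\Theta/2), \sin(\Theta/2)\textbf{V}]$ order by order in $\varepsilon$. First, the unit constraints (\ref{dualQuatConstraints}) give $\textbf{r}\in\mathbb{H}_1$, so by (\ref{rotQuat}) I can write $\textbf{r} = [\cos(\theta/2), \sin(\theta/2)\textbf{v}]$ with unit $\textbf{v}\in\mathbb{R}^3$. The second constraint $\textbf{r}\bar{\textbf{s}} + \textbf{s}\bar{\textbf{r}} = 0$ says that $\textbf{s}\bar{\textbf{r}}$ has vanishing real part, so it equals $\tfrac{1}{2}\textbf{t}$ for a pure translation quaternion $\textbf{t}=[0,\textbf{u}]$; right-multiplying by $\textbf{r}$ gives $\textbf{s} = \tfrac{1}{2}\textbf{t}\textbf{r}$, which is exactly the representation (\ref{dualquatTrafo}).

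Next, with the dual angle $\Theta = \theta + \varepsilon\theta_\varepsilon$ and dual axis $\textbf{V} = \textbf{v} + \varepsilon\textbf{v}_\varepsilon$, I would apply the dual trigonometric formulas (\ref{dualSin})--(\ref{dualCos}) to expand the ansatz:
\begin{align*}
\cos(\Theta/2) &= \cos(\tfrac{\theta}{2}) - \tfrac{\varepsilon\theta_\varepsilon}{2}\sin(\tfrac{\theta}{2}), \\
\sin(\Theta/2)\,\textbf{V} &= \sin(\tfrac{\theta}{2})\textbf{v} + \varepsilon\bigl[\sin(\tfrac{\theta}{2})\textbf{v}_\varepsilon + \tfrac{\theta_\varepsilon}{2}\cos(\tfrac{\theta}{2})\textbf{v}\bigr].
\end{align*}
The $\varepsilon^0$ coefficients reproduce $\textbf{r}$ automatically. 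Equating the $\varepsilon^1$ coefficients with $\tfrac{1}{2}\textbf{t}\textbf{r}$ and splitting into scalar and vector parts yields two equations: the scalar equation forces $\theta_\varepsilon = \textbf{u}\cdot\textbf{v}$ (the screw pitch, i.e.\ the translation component along the rotation axis), and the vector equation then determines $\textbf{v}_\varepsilon$ from the component of $\textbf{u}$ orthogonal to $\textbf{v}$ together with $\textbf{u}\times\textbf{v}$. A short dot-product check confirms $\textbf{v}\cdot\textbf{v}_\varepsilon = 0$, so $\textbf{V}$ is genuinely a pure dual quaternion consistent with the form (\ref{pureDualquat}).

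The main obstacle will be the singular case $\sin(\theta/2) = 0$, where $\textbf{r}=\textbf{1}$ and the rotation axis is no longer pinned; dividing by $\sin(\theta/2)$ to recover $\textbf{v}_\varepsilon$ is then illegal. I would dispatch this case separately: the $\varepsilon^1$ matching collapses to $\theta_\varepsilon\textbf{v} = \textbf{u}$, solved by $\textbf{v} = \textbf{u}/\|\textbf{u}\|$, $\theta_\varepsilon = \|\textbf{u}\|$, and $\textbf{v}_\varepsilon = \textbf{0}$, with the trivial sub-case $\textbf{u}=\textbf{0}$ yielding the identity for any chosen axis. Together with the generic branch this exhausts all unit dual quaternions, establishing the claimed form.
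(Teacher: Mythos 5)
Your proof is correct, and it reaches the result by a different route than the paper. The paper argues geometrically: it invokes Chasles' theorem, takes a rotation--translation pair, explicitly reconstructs the screw axis (pitch $\theta_{\varepsilon}=\textbf{t}^{\text{T}}\textbf{v}$, a point $\textbf{u}$ on the axis obtained from the Rodrigues formula, moment $\textbf{v}_{\varepsilon}=\textbf{u}\times\textbf{v}$), and then verifies by substitution that the resulting dual quaternion has the form (\ref{dualquatDisplacement}). You instead work purely algebraically: you first derive $\textbf{s}=\tfrac{1}{2}\textbf{t}\textbf{r}$ directly from the unit constraints (\ref{dualQuatConstraints}) by observing that $\textbf{r}\bar{\textbf{s}}+\textbf{s}\bar{\textbf{r}}=2\,\mathrm{Re}(\textbf{s}\bar{\textbf{r}})$, and then expand the ansatz with the dual trigonometric identities (\ref{dualSin})--(\ref{dualCos}) and match the $\varepsilon^{0}$ and $\varepsilon^{1}$ coefficients, recovering the same pitch and the same moment (your $\textbf{v}_{\varepsilon}=\tfrac{1}{2}\bigl(\textbf{u}\times\textbf{v}+\cot(\theta/2)(\textbf{u}-(\textbf{u}\cdot\textbf{v})\textbf{v})\bigr)$ agrees with the paper's formula). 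Your version buys two things the paper does not make explicit: it starts from an arbitrary unit dual quaternion rather than from a given rigid transformation (so the reduction to the form (\ref{dualquatTrafo}) is proved, not cited), and it treats the degenerate case $\sin(\theta/2)=0$ (pure translation / identity), where the paper's $\cot(\theta/2)$-based construction is undefined; your handling of that branch is correct, up to the harmless remark that $\sin(\theta/2)=0$ also allows $\textbf{r}=-\textbf{1}$, for which the same matching goes through with the sign carried along. What the paper's route buys in exchange is the geometric reading of $\Theta$ and $\textbf{V}$ as screw angle/pitch and screw axis with moment, which it uses in the discussion immediately after the lemma.
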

\begin{proof}
Analogously to the quaternion rotation, the formulation (\ref{dualquatDisplacement}) can be understood as a parametrization of the rigid body motion.
According to Chasles' Theorem~\cite{Chen1991}, a displacement can be modeled via a translation along a unique axis with a simultaneous rotation about the same axis.
This is visualized in Figure~\ref{fig:screw_lin_displacement}.
We construct the dual quaternion displacement for this motion explicitly in the form (\ref{dualquatDisplacement}).
\begin{figure}
	\centering
	\includegraphics[width=0.7\linewidth]{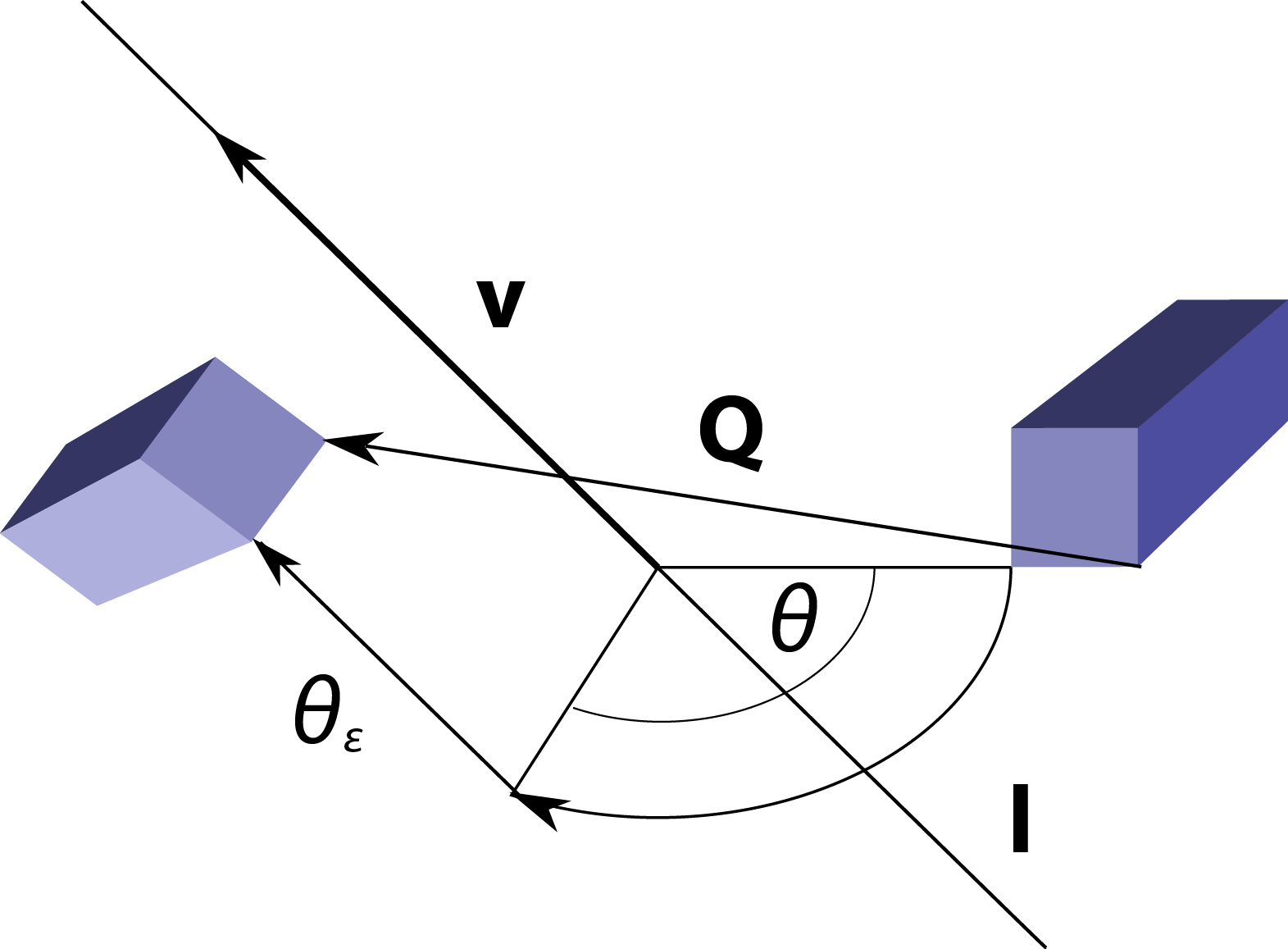}
	\caption{Screw linear displacement of rigid body with dual quaternion $\textbf{Q}$ along screw axis $\textbf{l}$ with angle $\theta$ and pitch $\theta_{\varepsilon}$ in the direction of $\textbf{v}$.}
	\label{fig:screw_lin_displacement}
\end{figure}\\
Let a rigid body transformation be given by a translation $\textbf{t} \in \mathbb{R}^3$ and a rotation $\textbf{R} \in \mathbb{R}^{3 \times 3}$ around the axis $\textbf{v}$ with $\left\|\textbf{v}\right\| = 1$ with angle $\theta$.
From (\ref{dualquatTrafo}) we know already the unit dual quaternion for this displacement.\\
The parameters for the screw motion are angle $\theta$, pitch $\theta_{\varepsilon}$, screw axis $\textbf{l}$ with moment $\textbf{v}_{\varepsilon}$ (i.e. $\textbf{v}_{\varepsilon} = \textbf{p} \times \textbf{v}\ \forall\ \textbf{p} \in \textbf{l}$) and direction $\textbf{v}$.
The angle $\theta$ is directly given.
We first compute the pitch $\theta_{\varepsilon}$ in the direction $\textbf{v}$ of the axis as the projection of the translation onto the axis.
This is $\theta_{\varepsilon}	= \textbf{t}^{\text{T}} \textbf{v}$.
In order to recover the moment $\textbf{v}_{\varepsilon}$, we pick a point $\textbf{u}$ on the axis.
With this we can describe $\textbf{t}$ in terms of $\theta_{\varepsilon}$, $\textbf{v}$, $\textbf{R}$ and $\textbf{u}$ as
\begin{align}
	\textbf{t}
	= \theta_{\varepsilon} \textbf{v} + \left( \textbf{I} - \textbf{R} \right) \textbf{u}
	\label{resultingTrans}
\end{align}
and with the Rodrigues formula it holds
\begin{align}
	\textbf{R} \textbf{u}
	= \textbf{u}
	+ \sin\left( \theta \right) \textbf{v} \times \textbf{u}
	+ \left( 1 - \cos\left( \theta \right) \right) \textbf{v} \times \left( \textbf{v} \times \textbf{u} \right).
\end{align}
Thus substituting this into (\ref{resultingTrans}) gives with $\textbf{u}^{\text{T}} \textbf{v} = 0$
\begin{align}
	\textbf{u}
	= \frac{1}{2} \left( \textbf{t} - \left( \textbf{t}^{\text{T}}\textbf{v} \right) \textbf{v}
	+ \cot \left( \frac{\theta}{2} \right) \textbf{v} \times \textbf{t} \right),
\end{align}
which brings for the moment vector
\begin{align}
	\textbf{v}_{\varepsilon}
	= \textbf{u} \times \textbf{v}
	= \frac{1}{2} \left( \textbf{t} \times \textbf{v}
	+ \cot \left( \frac{\theta}{2} \right) \textbf{v} \times \left( \textbf{t} \times \textbf{v} \right) \right).
\end{align}
Substituting the rotation quaternion $\textbf{r} = \left[q_0, \textbf{q} \right]$ and using $\theta_{\varepsilon}	= \textbf{t}^{\text{T}} \textbf{v}$ yields
\begin{align}
	\sin\left( \frac{\theta}{2} \right) \textbf{v}_{\varepsilon}
	+ \frac{\theta_{\varepsilon}}{2} \cos\left( \frac{\theta}{2} \right) \textbf{v}
	= \frac{1}{2} \left( \textbf{t} \times \textbf{q} + q_0 \textbf{t} \right),
\end{align}
which is the pure quaternion of the dual part in (\ref{dualquatTrafo}).
Thus
\begin{align}
	&\textbf{Q}
	= \left[\cos \left( \frac{\theta}{2} \right), \sin \left( \frac{\theta}{2} \right) \textbf{v}_{\varepsilon}\right]\\
	&+ \varepsilon \left[-\frac{\theta_{\varepsilon}}{2} \sin\left( \frac{\theta}{2} \right) ,
		\sin\left( \frac{\theta}{2} \right) \textbf{v}_{\varepsilon}
		+ \frac{\theta_{\varepsilon}}{2} \cos\left( \frac{\theta}{2} \right) \textbf{v} \right]
\end{align}
which equals (\ref{dualquatDisplacement}) if we apply the trigonometric operators (\ref{dualSin}), (\ref{dualCos}) and the dual entity represantations (\ref{dualPhi}), (\ref{dualV}).
\end{proof}
We note that this representation separates the line information of the screw axis from the pitch and angle values in an algebraical way where the dual vector $\textbf{V}$ represents the axis of a screw motion with its direction vector and the dual angle $\Theta$ contains both the translation length and the angle of rotation.\\
Since for the exponential of a dual quaternion of the form $\textbf{Q} = \textbf{V} \frac{\Theta}{2}$ it holds~\cite{Kavan2006}
\begin{align}
	\exp \left( \textbf{V} \frac{\Theta}{2} \right)
	= \left[\cos \left( \frac{\Theta}{2} \right), \sin \left( \frac{\Theta}{2} \right) \textbf{V}\right],
\end{align}
the inverse function of $\exp$ for quaternions of the form (\ref{dualquatDisplacement}) is then given by
\begin{align}
	\log : \mathbb{D}\mathbb{H}_1 &\rightarrow T_{\textbf{1}}\mathbb{D}\mathbb{H}_1\\
	\left[\cos \left( \frac{\Theta}{2} \right), \sin \left( \frac{\Theta}{2} \right) \textbf{V}\right]
	&\mapsto \textbf{V} \frac{\Theta}{2}.
	\label{logDualQuat}
\end{align}

\begin{figure}[t]
	\centering
	\includegraphics[width=0.7\linewidth]{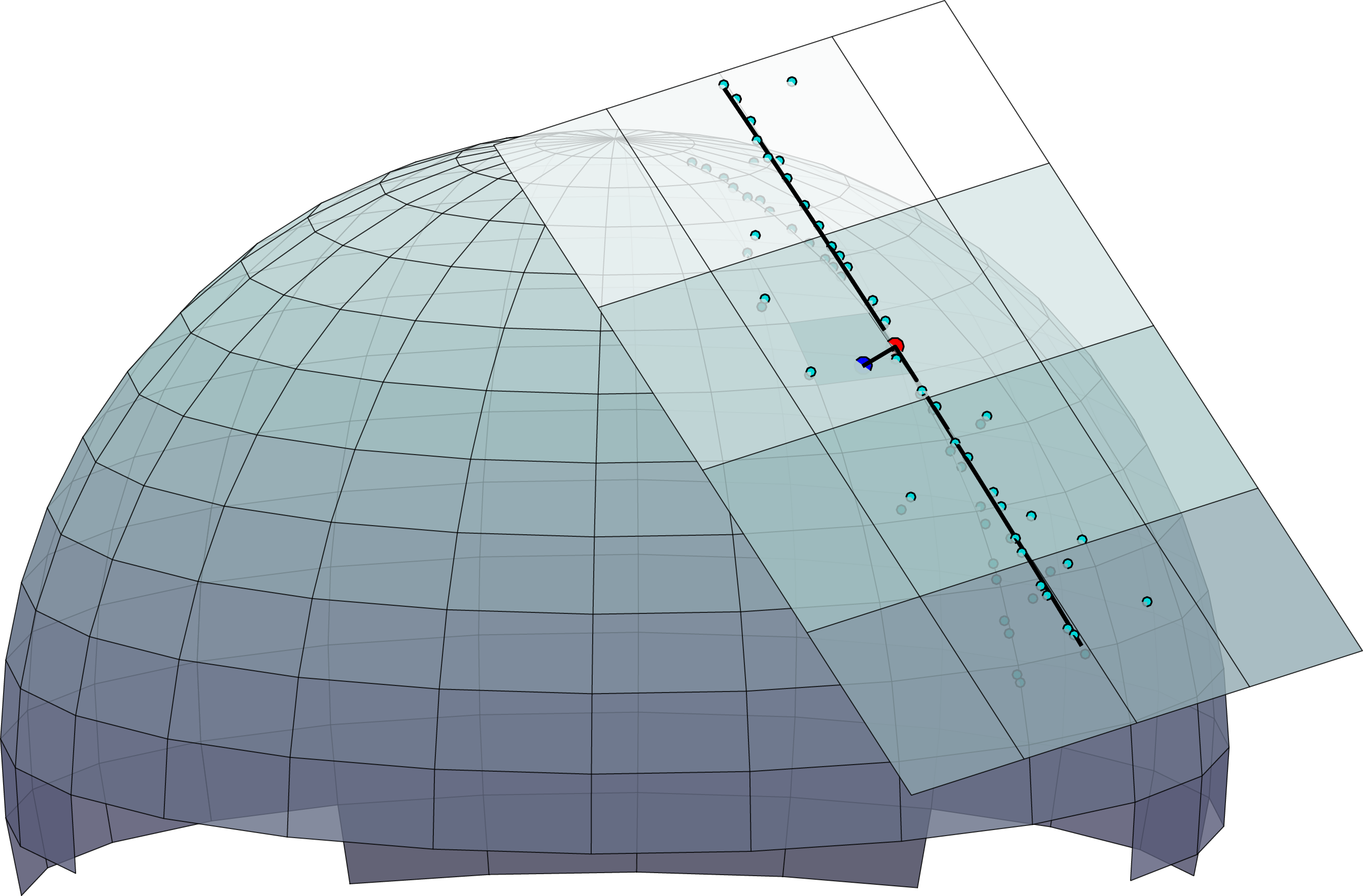}
	\caption{Robust PC-regression on tangent space. The figure illustrates the conceptual idea in 3D, rather than 8D.}
	\label{fig:LinearRegTangent}
\end{figure}
\section{Pose Filter}
\label{sec:poseFilter}\noindent
With the ideas from Section~\ref{sec:math} it is now possible to linearize the pose spaces $SO\left(3\right)$ and $SE\left(3\right)$ locally with the logarithm maps (\ref{logQuat}) and (\ref{logDualQuat}) using parallel transport (\ref{parallelTransportLog}) such that Euclidean methods can be applied for investigations and processing of pose data.
A solution of an operation in tangent space $T_{\textbf{x}}\mathbb{G}$ can be mapped back thereafter with the exponential maps (\ref{expQuat}), (\ref{expDualQuat}) and (\ref{parallelTransportExp}) without affecting the Riemannian geometry of the manifold.\\
In the following, we develop a robust smoothing method in (dual) quaternion space by using the aforementioned operations to filter the pose data with a linear approximation of the rigid body movement in the tangent space.
\subsection{Robust Motion Stabilization}
\label{sec:denoising}\noindent
A time-varying, sequential motion can be interpreted on the pose space $\mathbb{D} \mathbb{H}_1$ as a high dimensional trajectory. Generally, due to lack of constraints on physical motion or tracking errors, this trajectory is highly non-linear, non-Gaussian and includes outliers. Moreover, it is non-uniformly sampled because of velocity changes or sudden jumps. We take this non-ideal setting into account and propose a parameter-free path smoothing algorithm which is robust, flexible, intuitive and could naturally benefit from the availability of the uncertainties in the pose estimation.
Locally, we treat the trajectory as a linear one, and seek to find the linear association of the data points $\textbf{X}$ to the responses $\textbf{Y}$ such that
$\textbf{Y}=\textbf{X}\beta + \epsilon$.
A common way to discover this local relationship is using the Generalized Least Squares minimization
\begin{equation}
\beta^*=\arg\min_{\beta} (\textbf{Y}-\textbf{X}\beta)^T \textbf{W}(\textbf{Y}-\textbf{X}\beta).
\end{equation}
with diagonal weight matrix $\textbf{W}$. The solution is given by weighted least squares regression
\begin{equation}
\beta^*=(\textbf{X}^T\textbf{W}\textbf{X})^{-1}\textbf{X}^T\textbf{W}\textbf{Y}.
\end{equation}
Equivalently, when the data is rather unevenly distributed, one likes to prevent distinguishing the predictor and response, thereby rather looking for meaningful linear projections (e.g. maximum variance). Thus, such an hyper-line can be obtained from the first principal component of the data points $\textbf{X}$. Let $\textbf{X}=\textbf{U}\textbf{S}\textbf{V}^T$ denote the singular value decomposition (SVD) of $\textbf{X}$, with right singular vectors $\textbf{V}$.
Then, $\textbf{V}\Delta \textbf{V}^T$ gives the spectral decomposition of the covariance matrix $\textbf{X}\textbf{X}^T$ with the non-negative eigenvalues $\Delta=\text{diag}(\lambda_1 \ldots \lambda_p)$.
Weights are then transferred directly to the covariance matrix by
\begin{equation}
\textbf{C} = \frac{1}{2 \|\text{diag}(\textbf{W})\|} (\textbf{X}-\bm{\mu})\textbf{W} (\textbf{X}-\bm{\mu})^T.
\end{equation}
The columns of $\textbf{V}$ give an orthonormal set of eigenvectors and $\mathbf{X}\mathbf{v}_j$, the $j^{th}$ principal component.
The data can then be projected onto the first principal subspace resulting in the principal covariates $\mathbf{X}\textbf{V}_k := \{\mathbf{X}\mathbf{v}_1 \ldots \mathbf{X}\mathbf{v}_k\}$.
To smooth the trajectory, the central point $\textbf{c}$ is projected onto the PC-line as illustrated on the plane in Figure~\ref{fig:LinearRegTangent}.
Note that this fit assumes a local Gaussian distribution, while the global distribution can be arbitrary.
\begin{algorithm}[t!]
	\small
	\caption{irls\_wpca : IRLS for weighted PCA.}
	\label{alg:irls}
	\begin{algorithmic}
		\Require{Local set of poses $\textbf{X} = \{\textbf{X}_i\}$, \# Iterations $N$, Prior weights $\textbf{w}^0 = \{w_i\}$}
		\Ensure{PCA line $\textbf{l}$ with projections $\textbf{X}_{fit}$}
		\State $\textbf{w} \gets \textbf{w}_0$
		\For {$i=1:N$}		    
		    \State $\{\textbf{X}_{proj}, \textbf{l}\}\gets \text{weighted\_pca(} \textbf{X}\text{,} \textbf{w}\text{)}$
		    \State $\text{Update } \textbf{w} \text{ using (\ref{eq:weight_update})}$
		    \State $\textbf{w} \gets {\textbf{w} \cdot \textbf{w}^0}/{\|\textbf{w} \cdot \textbf{w}^0\|}$ \Comment{Dampen the estimates.}
		\EndFor
	\end{algorithmic}
\end{algorithm}

\begin{algorithm}[t!]
	\small
	\caption{Manifold PC-Local Regression.}
	\label{alg:filtering}
	\begin{algorithmic}
		\Require{Set of poses $\textbf{X} = \{\textbf{X}_i\}$, Kernel size $K$, Prior weights $\textbf{w}^0 = \{w_i\}$ for local window}
		\Ensure{Filtered poses $\textbf{X}^f = \{\textbf{X}^f_i\}$}
		\State $\textbf{X}^f \gets [ \ ]$
		\For {$\textbf{x}_i \in \textbf{X}$}		    
		    \State $\textbf{X}_{\Omega} \gets \{\textbf{x}_k\} \in \Omega_i$
		    \State $\textbf{X}^t_\Omega \gets \log_{\textbf{x}_i}(\textbf{X}_{\Omega})$
		    \State $\textbf{X}_\Omega^{proj} \gets \text{irls\_wpca}(\textbf{X}^t_\Omega, \textbf{w}_0)$
		    \State $\textbf{x}^f_i \gets \exp_{\textbf{x}_i}(\textbf{X}_\Omega^{proj}(i))$
		    \State $\textbf{X}^f \gets \textbf{X}^f \cup \textbf{x}^f_i$
		\EndFor
	\end{algorithmic}
\end{algorithm}

While such PCA scheme holds for the Euclidean spaces, it does not generalize to arbitrary manifolds such as the dual quaternion space, because these spaces are not necessarily Euclidean.
Buss and Fillmore~\cite{Buss2001} show that even regressing a great arc on the quaternion hypersphere has ambiguities.
For the case of local regression, the central point of fitting is known which enables us to map the immediate neighbourhood onto the tangent space $T_{\textbf{x}}\mathbb{G}$, thereby circumventing the non-Euclidean nature of the dual quaternions.
Thanks to the manifold structure, the tangent space locally behaves Euclidean and we can perform the fit.
To smooth the curve, the central point is projected onto the regressed 8D-line and mapped back onto the manifold using the exponential map.
We refer to this method as plain PCA filtering.
Naturally, the 8D data points, which are closer to the center of the local model $\textbf{c}$ are more relevant for the fit, as the linearity assumption decreases with the distance.
Therefore we multiply each data point by a Gaussian prior function to downweight the points based on their relative position:
\begin{equation}
w^0_i = \exp\Big(-\frac{1}{2} (\textbf{x}_i-\textbf{c})^T\textbf{D}(\textbf{x}_i-\textbf{c}) \Big),
\end{equation}
where $\textbf{D}$ is a positive semi-definite distance metric explaining the region of influence.
In the following experiments this approach is named weigthed PCA (wPCA).

Oftentimes the pose space contains outliers, for which a naive fit does not work.
To care for this, we introduce a re-weighted procedure, in which the residuals of the current fit are used to update the weights for the next iteration.
This is commonly referred as iteratively reweighted least squares (IRLS).
We use a simple distance based weight update:
\begin{equation}
\label{eq:weight_update}
\textbf{w}_{i+1} = 1/{\max \Big(\delta, \frac{1}{K}\sum\limits_{k=1}^K |\textbf{r}_i^k|\Big)},
\end{equation}
where $\delta$ is a small number, preventing division by zero and $\{\textbf{r}_i^k\}$ are the residuals at iteration $i$. 

Algorithm \ref{alg:filtering} summarizes our final implementation.
As this method is independent of the structure of the parameter space, we can either use it individually on rotations and translations or on the dual quaternion space.

\section{Experiments}
\label{sec:experiments}\noindent
In the following, the pose filtering methods presented in Section~\ref{sec:poseFilter} are evaluated and compared to other approaches in two different scenarios.
A first synthetic experiment analyzes the ability of the smoothing algorithms to recover a noisy pose series with outliers while the second evaluation is performed on a real dataset of natural hand movement in a collaborative medical robotic environment where tracking accuracy is crucial.
\subsection{Synthetic Tests}
\label{sec:syntheticData}\noindent
Our first test evaluates the robustness and accuracy of the tangent space regressors.
In order to evaluate these properties we generate a synthetic dataset from a ground truth rigid body movement.\\
A set of five points $\textbf{v}_i \in \mathbb{R}^3$ together with five values $\theta_i \in \left[0, 2 \pi\right]$ is chosen as query points representing the rotation axis and angle of the rotations $\textbf{R}_i$.
Five points $\textbf{t}_i \in \left[0, 1\right]^3$ represent the translational component of the poses.
A cubic spline interpolates both the axes and angles and with (\ref{rotQuat}) and (\ref{dualquatTrafo}) we get a pose representations in $\mathbb{H}_1 \times \mathbb{R}^3$ and $\mathbb{D} \mathbb{H}_1$.
As the space $\mathbb{R}^3$ is already Euclidean we can perform a pose filtering for the 6-DoF pose both in $\mathbb{D} \mathbb{H}_1$ and $\mathbb{H}_1 \times \mathbb{R}^3$.
For the latter, all methods are applied twice on both spaces independently.\\
To evaluate the performance of the local regression, we sample the ground truth pose series densely and apply additional noise uniformly distributed in $\left[-\sigma, \sigma\right]$ with $\sigma = 0.02$ to the angle and the axis of rotation as well as to the translation.
On top, random outliers for $5~\%$ of the data points are created with an additional noise of $\sigma = 0.2$.\\
Then we run the methods PCA, wPCA, IRLS, Dual PCA, Dual wPCA, Dual IRLS as well as a Linear Kalman Filter on the data.
We use a window size of 19 and the Kalman implementation~\cite{Welch1995} of MATLAB~\cite{MatlabKalmanFilter} with a covariance tuple of $\left[0.5, 2\right]$ for the rotation and $\left[0.2, 1\right]$ for the translation process noise and measurement noise covariance.
The resulting pose set is illustrated together with the results for the Dual IRLS method in Figure~\ref{fig:synthetic_tests}.
\begin{figure}
	\centering
	\includegraphics[width=.95\linewidth]{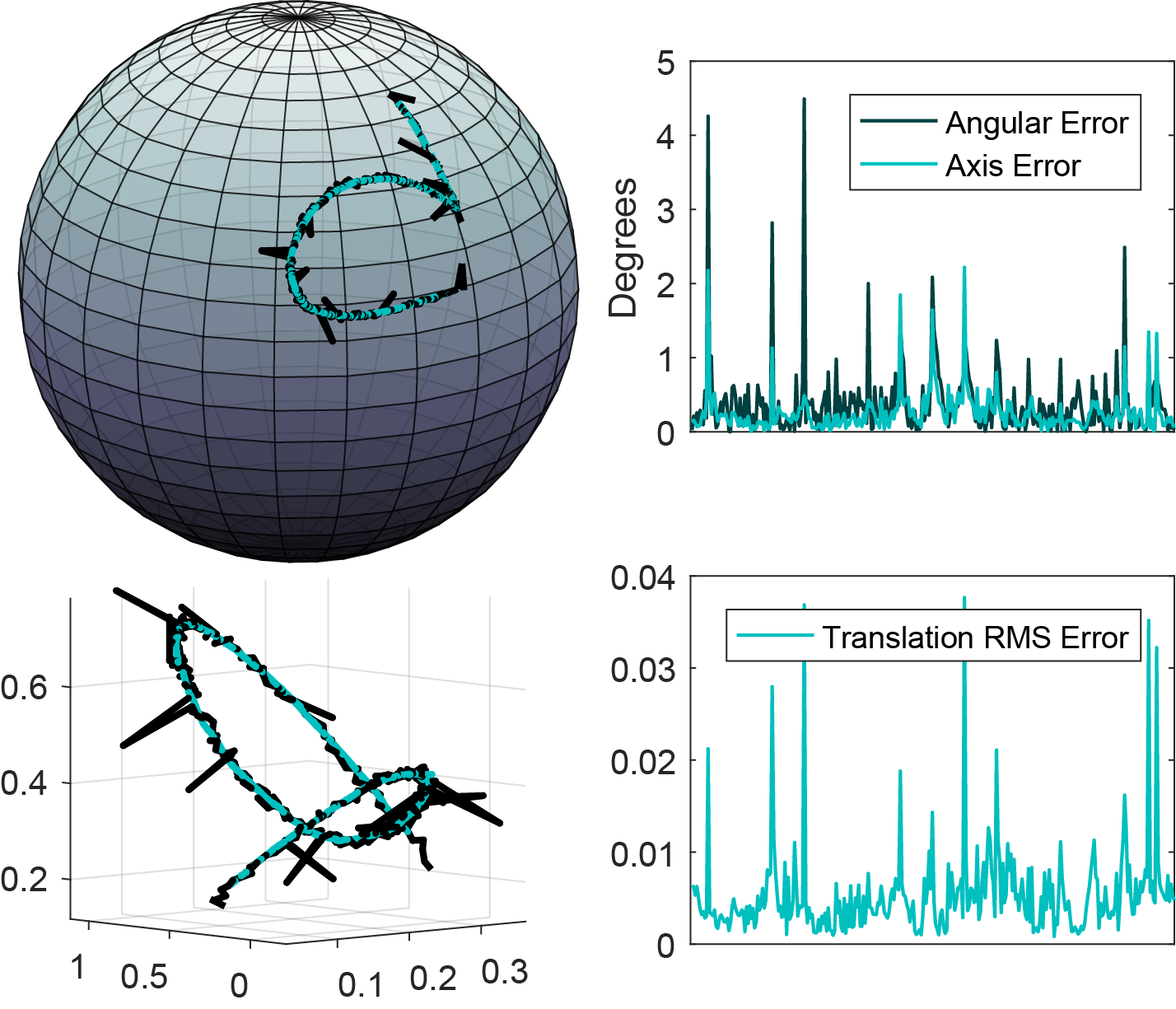}
	\caption{Dual IRLS applied to synthetic rigid body movement (black). On the left, the intersections of the rotation axis with $S^2$ are illustrated for both the noisy input (black) and the filtered pose (turquoise). The translation part is shown in the bottom. On the right, the angular and axis errors of the rotation are shown. The bottom plot shows the RMS error of the translational component.}
	\label{fig:synthetic_tests}
\end{figure}
An error quantification for the different methods is given in Figure~\ref{fig:synthetic_evaluation}.
\begin{figure}
	\centering
	\includegraphics[width=.95\linewidth]{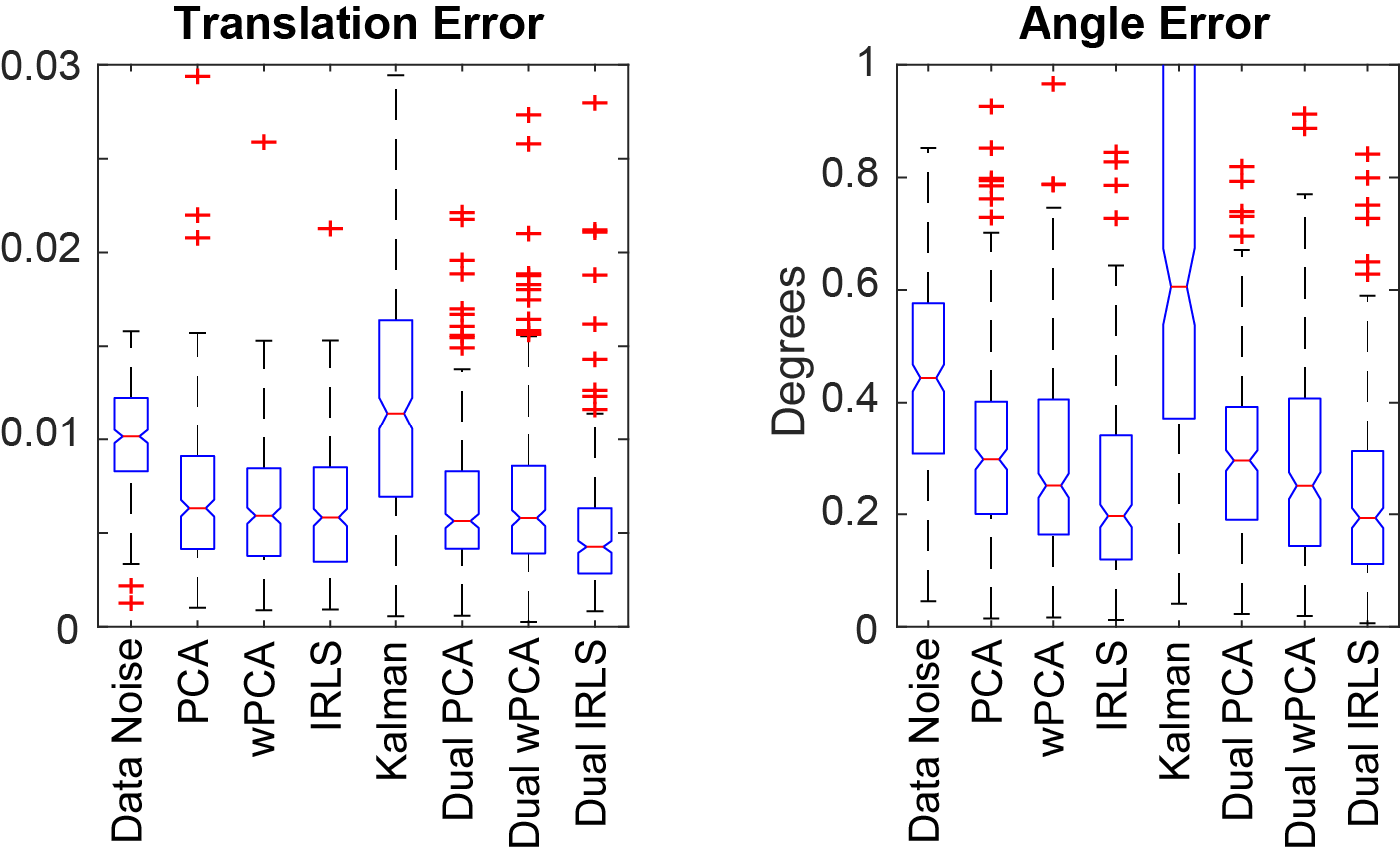}
	\caption{Box plots for performance tests of different pose filtering methods on synthetic data with the given noise shown in Data Noise. The illustrated Angle Error is the angular difference of the rotation axis to the ground truth.}
	\label{fig:synthetic_evaluation}
\end{figure}
For the Kalman filter, tradeoff values have been chosen which are still able to recover the pose without over-smoothing.
However, the method only evaluates the past points and thus information of half the window size for future poses is not included which explains the performance difference.
The direct local PCA methods perform equally well in the same error range while the weighting gain in the separate treatment is slightly better with $0.010 \pm 0.020$ in translation and $0.37 \pm 0.55^{\circ}$ rotation.
The outlier aware IRLS method performs best for the angle while the improvement for the translation is with $2.1 \cdot 10^{-3}$ only minimal for the non-dual quaternions.
It can be clearly seen that the treatment of outliers in the dual space $\mathbb{D} \mathbb{H}_1$ helps to increase the accuracy to $4.3 \cdot 10^{-3}$ and $0.26^{\circ}$ in the median.
This can be intuitively understood through the fact that the local neighbourhood of the linear regression line in $\mathbb{D} \mathbb{H}_1$ is much more restrictive than the joint neighbourhoods in $\mathbb{H}_1 \times \mathbb{R}^3$ where the effect of an outlier in the separate parameter spaces is higher.\\
It is also worth mentioning that the only parameter of the presented methods is the window size which directly reflects the movement speed of the displacements while the parameter adjustments for a filter method such as Kalman are more elaborate.\\
For visual comparison of the different methods on a synthetic pose stream with noise, please be referred to the supplementary video.\footnote{\url{http://campar.in.tum.de/Chair/PublicationDetail?pub=busam2017_mvr3d}}
\subsection{Tracking Stream Refinement}
\label{sec:robotData}\noindent
We compare our methods on the dataset of Busam et al.~\cite{Busam2016} where a robotic arm is set to gravity compensation mode with zero stiffness and a human operator performs a natural hand movement manipulating its end effector.
The robot is tracked via a marker based stereo vision system running the tracking algorithm~\cite{Busam2015} and calibrated such that the forward kinematics of the industrial robotic manipulator provide ground truth poses with a precision of $0.05$~mm.\\
The $30$~Hz pose stream is fed into our filter pipelines and compared to the absolute poses of the dataset.
Figure~\ref{fig:3dv_evaluation} illustrates the results where we use the same naming and parameters as in Section~\ref{sec:syntheticData}.
\begin{figure}
	\centering
	\includegraphics[width=\linewidth]{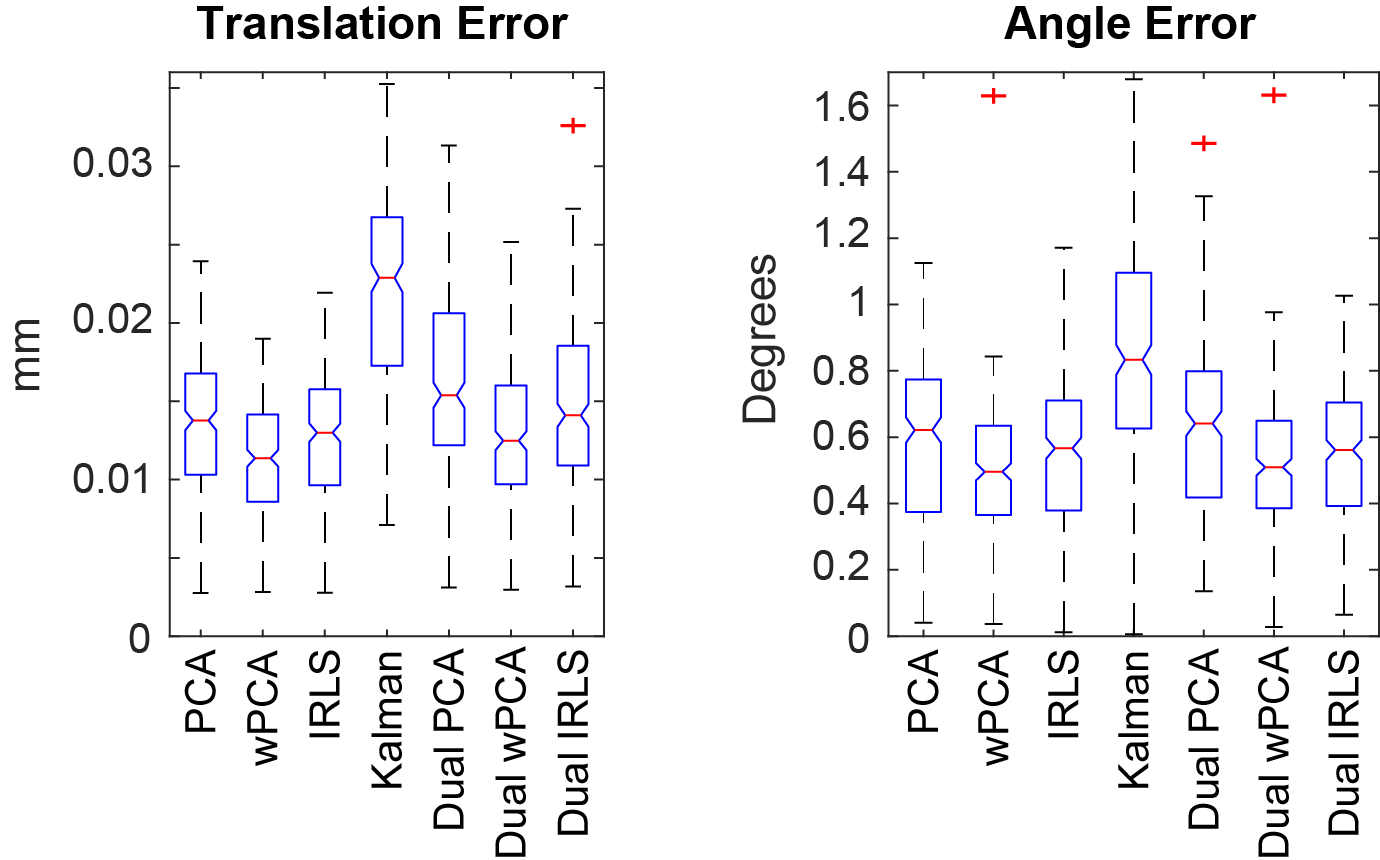}
	\caption{Box plots for tracking stream refinements of natural hand movement.}
	\label{fig:3dv_evaluation}
\end{figure}\\
It is noteworthy that in this scenario, the wPCA methods perform best with $11.2 \pm 3.9\mu m$ (median $11.4\mu m$) and $0.7 \pm 1.4^{\circ}$ (median $0.5^{\circ}$) for the non-dual one while the IRLS methods give only mediocre results between PCA and wPCA both for quaternions ($13.0\mu m$, $0.6^{\circ}$ median) and dual quaternions ($14.1\mu m$, $0.6^{\circ}$ median).
The Kalman filter again gives acceptable results for which heuristical parameter fine-tuning did not show any significant improvements.\\
The advantage of the dual space robustification - which can yield a significant improvement (see Section~\ref{sec:syntheticData}) in the case of outliers - is not applicable as there are only few outliers in the already quite accurate optical tracking data.
The IRLS methods suffer from this problem as the weights for equally important data points are reduced.
This results in case of reliable pose data in the fact that the separate treatment of translation and rotation is preferable as the non-dual regressors perform better.

\section{Conclusion}
\label{sec:conclusion}\noindent
We presented a method for camera pose filtering to recover smooth trajectories in pose space.
The use of differential geometry helps to describe a local regression problem in the tangent space of unit (dual) quaternions.
Applying a robust line fitting on the principal component of the pose measurements allows to filter an ordered pose series in a way that both the translation path as well as its orientation can be recovered.
Besides smoothing and outlier-awareness, the method benefits from being non-parametric.
In particular no explicit noise model needs to be modeled.
Our experiments revealed that the dual space formulation robustifies the smoothing.
As the logarithm and exponential maps are defined locally, the obvious downside of a local linearization is its need for a dense sampling around the touch point of the tangent space with the manifold.
For an online application of the filtering also the moving window has to be cut to half or future points need to be extrapolated.

This work could be extended towards including an uncertainty measure into the local weights.
An intuitive vision measure would be for example the backprojection error in a feature point tracking algorithm or the registration RMS of a 3D localization method.
Moreover, the mentioned need for dense samples could be explicitly modeled including the timestamps in a tracking application.
We believe that problems such as video stabilization or visual odometry could also benefit from the robust regression.

\clearpage

{\small
\bibliographystyle{ieee}
\bibliography{busam_birdal_iccv}
}

\end{document}